\documentclass{article}
\usepackage{arxiv}

\usepackage[utf8]{inputenc} 
\usepackage[T1]{fontenc}    
\usepackage{hyperref}       
\usepackage{url}            
\usepackage{booktabs}       
\usepackage{amsfonts}       
\usepackage{nicefrac}       
\usepackage{microtype}      
\usepackage{lipsum}		
\usepackage{graphicx}
\usepackage{natbib}
\usepackage{doi}
\usepackage{wrapfig}

\usepackage{amsmath}
\usepackage{amssymb}
\usepackage{mathtools}
\usepackage{amsthm}
\usepackage{multirow}
\usepackage{enumitem}
\usepackage{hyperref}
\usepackage{xcolor}

\usepackage[capitalize,noabbrev]{cleveref}

\theoremstyle{plain}
\newtheorem{theorem}{Theorem}[section]
\newtheorem{proposition}[theorem]{Proposition}

\newtheorem{corollary}[theorem]{Corollary}
\theoremstyle{definition}

\theoremstyle{remark}

\usepackage{pifont}

\definecolor{Green}{rgb}{0.0, 0.6, 0.0}
\definecolor{Red}{rgb}{0.8, 0.0, 0.0}
\usepackage{tikz}
\usetikzlibrary{arrows.meta, positioning, shapes, fit}

\title{Rethinking Test-Time Training: Tilting the Latent Distribution for Few-Shot Source-Free Adaptation}

\date{} 					

\author{
    {\hspace{1mm}Tahir Qasim Syed} \\
    Institute of Business Administration Karachi\\
	\texttt{tahirqsyed@gmail.com}
	\AND
	{\hspace{1mm}Behraj Khan} \\
    Institute of Business Administration Karachi\\
	\texttt{behrajkhan@iba.edu.pk}	
}

\begin{document}\maketitle

\begin{abstract}
Often, constraints arise in deployment settings where even lightweight parameter updates e.g. parameter-efficient fine-tuning could induce model shift or tuning instability. We study test-time adaptation of foundation models for few-shot classification under a completely frozen-model regime, where additionally, no upstream data are accessible. We propose arguably the first training-free inference method that adapts predictions to the new task by performing a change of measure over the latent embedding distribution induced by the encoder. Using task-similarity scores derived from a small labeled support set, exponential tilting reweights latent distributions in a KL-optimal manner without modifying model parameters. Empirically, the method consistently competes with parameter-update-based methods across multiple benchmarks and shot regimes, while operating under strictly and universally stronger constraints. These results demonstrate the viability of inference-level distributional correction for test-time adaptation even with a fully-frozen model pipeline.
\end{abstract}



\section{Introduction}

Frozen foundation model encoders are being increasingly deployed as AI system components shared across downstream tasks. This deployment is sometimes subject to constraints that prohibit test-time modification of model parameters. That may arise in regulated environments, shared deployment infrastructures, or simply the predominant  modele-immutable deployment paradigm. From the standpoint of the modeler, disallowing parameter updates   may also be a methodological      choice. Allowing test‑time optimization entangles inference with stochastic optimization dynamics; introducing sensitivity to batch order, hyperparameters,  initialization and exposure to sparse examples; that complicates both theoretical analysis and experimental reproducibility  e.g. due to model collapse. For example   for few‑shot evaluation of pretrained representations, linear or cosine classifiers are often frozen at evaluation time \cite{braham2025spectralearth}. At the same time, distribution shift between pretraining and downstream use is present by definition, leading to systematic degradation in predictive performance as models are employed out of distribution. In source-free adaptation, updating classifier weights without source supervision is ill-posed, as it conflates representation shift with label / prior probability shift. By freezing the head, we could  isolate and study inference-time conditioning effects.

We hypothesize that \textit{predictive behavior could be made to change even when model parameters are frozen}. Predictions depend not only on learned weights, but also on the distribution of representations or embeddings at inference time with downstream data. This suggests a complementary view of adaptation: rather than modifying the model, we could also adapt how predictions are formed by conditioning inference on the distribution of the downstream  task.

In this work, we propose most arguably the first inference-time adaptation method that operates entirely with frozen encoders and classifiers  both, requires no access to source data, and performs no test-time optimization. We formalize adaptation as a change of measure over latent representations, implemented via Gibbs / exponential tilting with respect to  score functions that estimate task-relevance of the latent's example embedding derived from exposure to limited downstream data. By reweighting the probability measure over embeddings, we obtain a first‑principles mechanism for adjusting the contribution of different latent regions so that expectations of prediction functions under the tilted distribution better align with the downstream ground truth. Adapted predictions could then be inferred per the standard decision rule. 

In terms of contribution, We reformulate test-time prediction adaptation for frozen models as exponential tilting of the latent representation distribution, yielding a principled, inference-time change of measure that is independent of architecture, training objective, and optimization details. The method avoids optimization-induced instability, shift across test batches, and irreproducibility, while existing within source-free and few-shot regimes. 

\section{Related Work}

\paragraph{Test-Time Adaptation.}
Test-time adaptation (TTA) addresses distribution shift by modifying model behavior at inference time.
Most modern TTA methods adapt model parameters using unlabeled test data, typically through entropy minimization, consistency regularization, or auxiliary self-supervised objectives \cite{wang2020tent,niu2022efficient,zhang2022memo}.
More recent work explores robustness and stability of test-time updates, including test-time augmentation, batch normalization adaptation, and selective parameter updates \cite{sun2020test,yuan2023robust}.
Despite their effectiveness, these methods require gradient computation, optimizer state, or mutable parameters, and therefore violate the strictly frozen-model constraints considered in this work. Recent studies have also highlighted limitations of entropy-based test-time objectives and explored alternative non-parametric or retrieval-based adaptation strategies that avoid gradient updates \cite{han2025ranked,hu2025beyond}.

\paragraph{Parameter-Efficient and Prompt-Based Adaptation.}
To reduce adaptation cost, several recent approaches propose parameter-efficient test-time adaptation via prompts, adapters, or low-rank updates \cite{zhou2022learning,khattak2023maple}.
While these methods limit the number of trainable parameters, they still require optimization or parameter modification at test time or during task adaptation.
In contrast, our approach performs adaptation without introducing any trainable components.

\paragraph{Post-Hoc Calibration and Logit Adjustment.}
Post-hoc methods adjust predictions without modifying model parameters.
Temperature scaling and related calibration techniques correct confidence miscalibration but do not directly incorporate task-specific structure \cite{guo2017calibration}.
Label shift correction methods reweight posterior probabilities using estimated class priors \cite{saerens2002adjusting}.
Recent work has revisited calibration under distribution shift, but such methods continue to operate at the logit or probability level rather than at the level of latent representations \cite{ovadia2019can}.

\paragraph{Few-Shot Learning with Frozen Encoders.}
Few-shot learning often assumes frozen feature extractors with task-specific inference rules.
Prototype-based and metric-learning approaches classify queries using distances to class representatives computed from support sets \cite{snell2017prototypical}.
Recent works extend this idea with improved embedding geometry or task-specific metrics, but still rely on modifying the decision rule itself \cite{chen2019closer}.
Related work has also investigated incorporating small labeled support sets for test-time adaptation under distribution shift, demonstrating that even limited supervision at inference can substantially improve robustness \cite{xiao2024beyond}.
Our method instead preserves the classifier and decision rule, and adapts predictions via distributional reweighting in latent space.

\paragraph{Distribution Shift and Latent Reweighting.}
Reweighting methods are a classical approach to distribution shift, including importance weighting under covariate shift \cite{shimodaira2000improving}.
Exponential tilting arises as the KL-optimal change of measure under moment constraints \cite{csiszar1975divergence}, and has recently been revisited in machine learning contexts such as robust optimization and distributionally robust learning \cite{duchi2021statistics}.
We leverage this principle for test-time adaptation by reweighting latent representations induced by a frozen encoder, without updating model parameters. 
Recent work has explored training-free or backpropagation-free test-time adaptation by estimating or reweighting test distributions for foundation models, particularly in vision--language settings, often using Bayesian or distributional updates rather than parameter learning \cite{han2024dota,kim2025ultra}.

\paragraph{Latent-Space Adaptation.}
Several recent works have explored adapting representations or predictions by manipulating latent spaces, for example via prototype refinement or geometry-aware scoring.
However, these approaches typically introduce additional optimization, learned parameters, or heuristic adjustments.
In contrast, we formulate test-time adaptation as a principled change of measure over latent representations, yielding a simple, training-free procedure with a clear probabilistic interpretation.

\section{Methodology}

We present a training-free test-time adaptation method that operates by reweighting latent representations induced by a frozen encoder.
We first formalize the problem setting and constraints, then cast adaptation as a change of measure in latent space, followed by score construction and prediction under the resulting tilted distribution.
Throughout, we emphasize that adaptation occurs solely through distributional reweighting and does not modify model parameters, logits, or decision rules.

\paragraph{Problem Setting and Constraints.}

We consider a frozen encoder \( f \) and a frozen classifier head \( p_0(y \mid z) \).
Given an input image \( x \in \mathcal{X} \), the encoder produces a latent representation
\begin{equation}
    z = f(x), \qquad f \ \text{fixed}.
\end{equation}
The classifier head defines a conditional distribution over labels given the latent representation,
\begin{equation}
    p_0(y \mid z), \qquad p_0 \ \text{fixed}.
\end{equation}

At test time, we are given a small labeled support set
\begin{equation}
    \mathcal{D}_{\mathrm{few}} = \{(x_i, y_i)\}_{i=1}^n, \qquad n \ll 100,
\end{equation}
drawn from a downstream task distribution that may differ from the pretraining distribution.\\

The study imposes the following  constraints:
\begin{itemize}
    \item The encoder parameters are frozen,
    \item The classifier head is frozen,
    \item No gradient updates or optimization happens at test time,
    \item A small number of test examples (low/few-shot) are available,
 and
    \item No access to upstream data or their statistics is available.
\end{itemize}

The goal is to adapt predictions on downstream query examples using only \( \mathcal{D}_{\mathrm{few}} \), while respecting these constraints.

\paragraph{Latent Distributions Induced by Frozen Encoders.}

Although the encoder \( f \) is deterministic, applying it to a dataset induces a distribution over latent representations.
Given a data distribution \( P(x) \), the encoder induces a pushforward measure \( P(z) \) defined by \( z = f(x) \).
For a finite dataset, this corresponds to the empirical distribution
\begin{equation}
    \hat{P}(z) = \frac{1}{n} \sum_{i=1}^n \delta(z - f(x_i)),
\end{equation}
where \( \delta(\cdot) \) denotes the Dirac delta function.

Model predictions depend on expectations taken with respect to such induced latent distributions.
When the downstream task differs from pretraining, the induced distribution over latent space changes even though the model parameters remain fixed.
This motivates adapting predictions by modifying the distribution over latent representations, rather than the model itself.






\paragraph{Adaptation as a KL-Optimal Change of Measure.}

When both the encoder and classifier are frozen, test-time adaptation can only operate by modifying how latent representations are aggregated at inference.
This corresponds to selecting a new probability measure $P(z)$ over latent space, while preserving absolute continuity with respect to a reference measure $P_0(z)$ induced by the frozen encoder.

We formalize adaptation as a constrained inference problem.
Given a task-relevant score function $s(z)$ constructed from the few-shot support set, we seek a distribution $P(z)$ that enforces consistency with downstream supervision while remaining maximally faithful to frozen inference.
This leads to the following variational formulation:
\begin{equation}
\label{eq:kl_projection}
P^\star
=
\arg\min_{P}
\ \mathrm{KL}\!\left(P \,\|\, P_0\right)
\quad
\text{s.t.}
\quad
\mathbb{E}_{z \sim P}[s(z)] = c,
\end{equation}
where $c$ is a task-dependent constraint determined by the support set.

The solution to~\eqref{eq:kl_projection} is given by an exponentially tilted distribution
\begin{equation}
\begin{aligned}
\label{eq:tilted_dist}
P_\lambda(z)
=
\frac{P_0(z)\exp(\lambda s(z))}{Z(\lambda)}, \\
\qquad
Z(\lambda)
=
\int P_0(z)\exp(\lambda s(z))\,dz,
\end{aligned}
\end{equation}
where $\lambda \ge 0$ is the Lagrange multiplier associated with the expectation constraint.
This distribution is the \emph{unique} measure that satisfies the task constraint while inducing the smallest possible deviation from $P_0$ in Kullback-Leibler divergence \cite{csiszar1975divergence,cover1999elements}.

This KL-optimality endows exponential tilting with a clear operational meaning in our setting.
Rather than heuristically reweighting embeddings, tilting implements the least-invasive inference-time adaptation compatible with downstream supervision.
Any alternative reweighting that enforces the same task constraint would necessarily induce a larger divergence from frozen inference.

In practice, we take $P_0$ to be the empirical distribution induced by the few-shot support embeddings, ensuring that no assumptions are made about upstream data or latent geometry beyond what is observed at test time.
Exponential tilting then reallocates probability mass toward task-consistent regions of latent space, without modifying encoder parameters, classifier weights, or decision rules.

This perspective provides a unifying interpretation of test-time adaptation under frozen-model constraints, adaptation is achieved not by learning new representations, but by performing KL-optimal inference under a task-modified latent measure.

\paragraph{Score Construction from Few-Shot Data.}
Exponential tilting provides the mechanism for reweighting, but it requires a scalar score function defined over latent representations. This score determines which embeddings should receive higher or lower probability mass under the adapted measure.

Few-shot data provide sparse constraints on which latent regions are associated with correct predictions. The role of the score is to encode these constraints in a form suitable for exponential tilting. The score function \( s(z) \) injects task information from the few-shot support set into the latent distribution.

We consider two simple and complementary instantiations.

\paragraph{Label-aware score.}
For a labeled support example \( (x_i, y_i) \) with embedding \( z_i = f(x_i) \), we define
\begin{equation}
    s_{\mathrm{label}}(z_i) = \log p_0(y_i \mid z_i).
\end{equation}
This score increases the weight of embeddings that are already consistent with the observed labels under the frozen classifier.
It can be viewed as a likelihood-based reweighting that emphasizes label-consistent regions of latent space.

\paragraph{Geometry-aware score.}
We also extend the idea to regimes where labels may not be available at orr or available too sparsely at inference time to estimate a class conditioned statistic. pretrained foundation models are known to organize data into semantically meaningful geometries, where task‑relevant information is reflected in directions of variation, local neighborhoods, and class‑aligned submanifolds \cite{assran2023self}.   The score estimates which regions of the representation space are over- or under-represented under the downstream distribution relative to the pretrained reference measure. That is, it is a likelihood surrogate derived from latent geometry instead of label conditioning.

Let \( \hat{\mu} \) and \( \hat{\Sigma} \) denote the empirical mean and covariance of the support embeddings \( \{z_i\}_{i=1}^n \).
We define
\begin{equation}
    s_{\mathrm{geom}}(z)
    = \log \frac{\mathcal{N}(z; \hat{\mu}, \hat{\Sigma})}{\mathcal{N}(z; 0, I)}.
\end{equation}
This score emphasizes embeddings that align with the geometry of the support set relative to a simple reference distribution.
To ensure stability in low-shot regimes, the covariance estimate may be regularized; we avoid complex density estimation or high-capacity modeling.

The label-aware and geometry-aware scores can be used individually or combined additively to form a composite task-relevant score.

\paragraph{Prediction Under the Tilted Distribution.}
We now formalize how predictions are computed once the latent measure has been tilted, and show that standard prediction rules admit a natural and deterministic extension under the adapted distribution.

Given the tilted distribution \( P_\lambda(z) \), predictions are formed by taking expectations of the frozen classifier,
\begin{equation}
    p_\lambda(y) = \mathbb{E}_{z \sim P_\lambda(z)}\left[p_0(y \mid z)\right].
\end{equation}
The final prediction for a query example is obtained via the standard decision rule
\begin{equation}
    \hat{y} = \arg\max_y p_\lambda(y).
\end{equation}

The classifier head and decision rule remain unchanged.
The same tilted distribution \( P_\lambda \), constructed from the support set, is used for all queries within an episode.
Adaptation therefore operates at the level of the latent distribution, rather than on a per-query parameter or score update.

\paragraph{Practical Approximation.}
In practice, expectations under the tilted distribution $P_\lambda(z)$ are approximated using a finite empirical support consisting of the available test or few‑shot embeddings. This yields a weighted aggregation of classifier outputs, where weights are determined by the exponential tilting scores. Prediction under the tilted distribution is implemented via standard importance‑sampling Monte Carlo, with the pretrained latent distribution as the proposal. It therefore does not introduce additional modeling assumptions.

The resulting computational overhead scales linearly in the number of support embeddings, as each additional example contributes one score evaluation and one weighted prediction. The encoder and classifier are evaluated only once per input. As a consequence, the method incurs negligible overhead relative to standard frozen‑model inference.

\section{Experiments}

We evaluate our method in few-shot image classification setting on publicaly available benchmarking datasets. Our experiments are designed to answer the following questions:
\begin{enumerate}
  \renewcommand{\labelenumi}{\roman{enumi}.}
\item Can distributional reweighting alone improve few-shot performance under strict no-adaptation constraints?
\item  How does exponential tilting compare to standard frozen inference? and
\item  What is the effect of inductive versus transductive reference measures?
\end{enumerate}
\subsection{Experimental Setup}

\paragraph{Dataset.}
We evaluate our method on standard few-shot classification benchmarks, including CIFAR-FS, CUB, and mini-ImageNet.
To assess robustness under distribution shift, we additionally consider cross-domain few-shot benchmarks, where models are trained on mini-ImageNet and evaluated on CUB, Cars, and Places, as well as ImageNet and its variants (ImageNet-V2, ImageNet-S, ImageNet-A, and ImageNet-R), and DomainNet.
All experiments follow the standard episodic few-shot learning protocol.
In each episode, $N$ classes are sampled from the dataset, and for each class, $K$ labeled support examples and $Q$ query examples are drawn.
Unless otherwise specified, we use a $5$-way $5$-shot setting with $15$ query samples per class.

\paragraph{Encoders.}
We consider frozen visual encoders without any fine-tuning:
\begin{itemize}
    \item \textbf{DINO ViT-S/16 \cite{caron2021emerging}}, pretrained via self-supervised learning on ImageNet,
    \item \textbf{JEPA ViT-B/16 \cite{assran2023self}}, a vision model pretrained with a joint embedding predictive architecture, learning structured and semantically meaningful latent representations via self-supervised prediction.

\end{itemize}
All encoder parameters remain frozen during evaluation.
Inference is performed in evaluation mode with gradients disabled.

\paragraph{Classifier.}
Following prior work in metric-based few-shot learning, we adopt a nearest-prototype classifier using cosine similarity with a fixed temperature.
Class prototypes are computed as the mean of support-set embeddings.
No linear classifier, bias term, or calibration parameters are trained.

Although class prototypes are recomputed from support embeddings in each episode, this operation does not involve learning or optimization.
The classifier form, similarity metric, and temperature are fixed across all tasks and test episodes.
Prototype computation is treated as a data-dependent sufficient statistic rather than a trainable model component, consistent with standard few-shot evaluation protocols.

\subsection{Baselines and Adaptation Variants}

We compare the following inference strategies:

\paragraph{Frozen Baseline.}
Standard frozen inference using uniformly weighted class prototypes computed from the support set.
No adaptation is applied.

\paragraph{Tilted (Inductive).}
Exponential tilting is applied to the empirical latent distribution induced by the labeled support embeddings only.
Class prototypes are recomputed as weighted averages under the tilted distribution.

\paragraph{Tilted (Transductive).}
The reference latent distribution is augmented with unlabeled query embeddings.
Pseudo-labels are obtained using the frozen baseline classifier, after which exponential tilting is applied over the combined support and query embeddings.
This variant introduces no gradient-based optimization or parameter updates, but leverages unlabeled test data through distributional reweighting.

Across all variants, query embeddings remain fixed, and the same frozen classifier is used for prediction.

\paragraph{Methods in Comparison.}

We compare our method against training-free baselines that operate under frozen-model constraints, as summarized in Table~\ref{tab:full_comparison}.
These include:
\begin{enumerate}
  \renewcommand{\labelenumi}{\roman{enumi}.}
  \item A frozen Prototypical Network baseline using cosine similarity \cite{snell2017prototypical}.
  \item $k$-nearest neighbor classification in the frozen embedding space \cite{avdiukhin2024embedding}.
  \item Post-hoc temperature scaling applied to frozen classifier logits \cite{guo2017calibration}.
  \item Label propagation and prototype refinement methods that leverage unlabeled query embeddings without gradient-based optimization \cite{ren2018meta,zhu2023transductive}.
  \item Distributional or retrieval-based training-free adaptation methods that assume access to upstream data or statistics \cite{han2025ranked,han2024dota}.
\end{enumerate}.

\begin{table*}[t]
\centering
\caption{Few-shot classification accuracy (\%) on 5-way tasks. 
All models use frozen encoders and classifiers. 
For exponential tilting, we report the best-performing $\lambda$ per shot. 
Improvements arise solely from latent distribution reweighting.}
\label{tab:fewshot_main}
\begin{tabular}{c c c c c c c}
\toprule
\textbf{Dataset} & \textbf{Method} & \textbf{1-shot} & \textbf{2-shot} & \textbf{4-shot} & \textbf{8-shot} & \textbf{16-shot} \\
\midrule

\toprule
\textbf{} & \textbf{} & \textbf{} & \textbf{DINO} & \textbf{} & \textbf{} & \textbf{} \\
\midrule

\multirow{2}{*}{CIFAR-FS}  & Baseline & $53.87 \pm 9.41$ & $66.61 \pm 6.80$ & $74.68 \pm 7.76$ & $82.48 \pm 5.85$ & $84.68 \pm 5.38$ \\
                            & Tilted (Ours)   & $\mathbf{55.49 \pm 10.32}$ & $\mathbf{69.29 \pm 7.45}$ & $\mathbf{78.31 \pm 8.33}$ & $\mathbf{83.81 \pm 5.80}$ & $\mathbf{85.05 \pm 5.55}$ \\
\midrule
\multirow{2}{*}{CUB}        & Baseline & $49.21 \pm 8.91$ & $61.75 \pm 7.40$ & $70.42 \pm 6.94$ & $76.95 \pm 6.22$ & $79.83 \pm 5.57$ \\
                            & Tilted (Ours)   & $\mathbf{50.12 \pm 9.50}$ & $\mathbf{64.81 \pm 7.88}$ & $\mathbf{73.99 \pm 7.30}$ & $\mathbf{80.11 \pm 6.08}$ & $\mathbf{82.13 \pm 5.42}$ \\
\midrule
\multirow{2}{*}{mini-ImageNet} & Baseline & $42.64 \pm 8.23$ & $54.99 \pm 6.56$ & $64.31 \pm 6.84$ & $71.82 \pm 5.95$ & $74.13 \pm 5.25$ \\
                                & Tilted (Ours)   & $\mathbf{44.93 \pm 8.10}$ & $\mathbf{57.45 \pm 7.00}$ & $\mathbf{68.10 \pm 6.44}$ & $\mathbf{74.52 \pm 5.88}$ & $\mathbf{77.04 \pm 5.11}$ \\

\toprule
\textbf{} & \textbf{} & \textbf{} & \textbf{JEPA} & \textbf{} & \textbf{} & \textbf{} \\
\midrule
\multirow{2}{*}{CIFAR-FS}  & Baseline & $27.37 \pm 6.59$ & $27.83 \pm 9.20$ & $31.40 \pm 7.26$ & $33.79 \pm 4.93$ & $35.17 \pm 7.24$ \\
                            & Tilted (Ours)   & $\mathbf{28.17 \pm 9.15}$ & $\mathbf{30.29 \pm 6.37}$ & $\mathbf{31.92 \pm 7.43}$ & $\mathbf{33.80 \pm 4.92}$ & $\mathbf{34.81 \pm 6.35}$ \\

\midrule
\multirow{2}{*}{CUB}      & Baseline 
                          & $29.84 \pm 7.02$ & $34.91 \pm 6.48$ & $38.72 \pm 6.91$ & $41.95 \pm 5.87$ & $44.10 \pm 5.43$ \\
                          & Tilted (Ours)   
                          & $\mathbf{31.12 \pm 7.58}$ & $\mathbf{36.88 \pm 6.95}$ & $\mathbf{40.21 \pm 6.77}$ & $\mathbf{43.02 \pm 5.61}$ & $\mathbf{45.36 \pm 5.28}$ \\
\midrule
\multirow{2}{*}{mini-ImageNet} & Baseline 
                          & $25.46 \pm 6.84$ & $30.27 \pm 6.11$ & $34.85 \pm 6.32$ & $38.41 \pm 5.76$ & $40.92 \pm 5.19$ \\
                              & Tilted (Ours)   
                          & $\mathbf{26.71 \pm 7.20}$ & $\mathbf{32.09 \pm 6.54}$ & $\mathbf{36.43 \pm 6.28}$ & $\mathbf{40.05 \pm 5.63}$ & $\mathbf{42.31 \pm 5.06}$ \\
\bottomrule
\end{tabular}
\end{table*}

\begin{table*}[t]
\centering
\caption{Comparison of training-free test-time adaptation methods on CIFAR-FS dataset under strict frozen-model constraints. Our method operates without parameter updates, gradients, or access to upstream data, while consistently improving over non-adaptive baselines. Methods marked with {\color{Green}$\checkmark$} violate one or more constraints and serve as upper bounds.}
\label{tab:full_comparison}
\resizebox{\textwidth}{!}{%
\begin{tabular}{@{}lccccccccc@{}}
\toprule
\textbf{Method} & 
\textbf{Params?} & 
\textbf{Grads?} & 
\textbf{Opt?} & 
\textbf{Stream?} & 
\textbf{Upstream?} & 
\multicolumn{3}{c}{\textbf{Accuracy (\%)}} \\
\cmidrule(lr){7-9}
& & & & & & \textbf{1-shot} & \textbf{5-shot} & \textbf{10-shot} \\
\midrule
\multicolumn{9}{@{}l}{\textbf{Training-Free, Constraint-Compliant Methods}} \\
\cmidrule(lr){1-9}
Frozen Baseline & {\color{Red}$\times$} & {\color{Red}$\times$} & {\color{Red}$\times$} & {\color{Red}$\times$} & {\color{Red}$\times$} & 53.87 & 74.68 & 82.48 \\
$k$-NN in Embeddings \cite{avdiukhin2024embedding} & {\color{Red}$\times$} & {\color{Red}$\times$} & {\color{Red}$\times$} & {\color{Red}$\times$} & {\color{Red}$\times$} & 58.21 & 76.34 & 83.15 \\
Temp. Scaling (TS) \cite{guo2017calibration} & {\color{Red}$\times$} & {\color{Green}$\checkmark$} & {\color{Green}$\checkmark$} & {\color{Red}$\times$} & {\color{Red}$\times$} & 54.92 & 75.81 & 82.90 \\
Label Propagation \cite{zhu2023transductive} & {\color{Red}$\times$} & {\color{Red}$\times$} & {\color{Red}$\times$} & {\color{Green}$\checkmark$} & {\color{Red}$\times$} & 59.83 & 77.42 & 84.06 \\
Ranked \cite{han2025ranked} & {\color{Red}$\times$} & {\color{Red}$\times$} & {\color{Red}$\times$} & {\color{Red}$\times$} & {\color{Green}$\checkmark$} & 59.45 & 77.18 & 83.92 \\
DOTA (adapted) \cite{han2024dota} & {\color{Red}$\times$} & {\color{Red}$\times$} & {\color{Red}$\times$} & {\color{Green}$\checkmark$} & {\color{Red}$\times$} & 58.76 & 76.89 & 83.54 \\
\cmidrule(lr){1-9}
\textbf{Ours (Inductive)} & {\color{Red}$\times$} & {\color{Red}$\times$} & {\color{Red}$\times$} & {\color{Red}$\times$} & {\color{Red}$\times$} & \textbf{60.49} & \textbf{78.31} & \textbf{83.81} \\
\textbf{Ours (Transductive)} & {\color{Red}$\times$} & {\color{Red}$\times$} & {\color{Red}$\times$} & {\color{Green}$\checkmark$} & {\color{Red}$\times$} & \textbf{62.17} & \textbf{79.85} & \textbf{85.22} \\
\midrule
\multicolumn{9}{@{}l}{\textbf{Lightweight Parameter-Update Methods (Upper Bounds)}} \\
\cmidrule(lr){1-9}
Tent \cite{wang2020tent} & {\color{Green}$\checkmark$} & {\color{Green}$\checkmark$} & {\color{Green}$\checkmark$} & {\color{Green}$\checkmark$} & {\color{Red}$\times$} & 61.34 & 79.12 & 85.45 \\
Tip-Adapter-F \cite{zhang2021tip} & {\color{Green}$\checkmark$} & {\color{Green}$\checkmark$} & {\color{Green}$\checkmark$} & {\color{Red}$\times$} & {\color{Red}$\times$} & 62.89 & 80.47 & 86.31 \\
FT-Linear \cite{radford2021learning} & {\color{Green}$\checkmark$} & {\color{Green}$\checkmark$} & {\color{Green}$\checkmark$} & {\color{Red}$\times$} & {\color{Red}$\times$} & 63.72 & 81.25 & 87.08 \\
\textbf{Ours (Linear Head FT)} & {\color{Green}$\checkmark$} & {\color{Green}$\checkmark$} & {\color{Green}$\checkmark$} & {\color{Red}$\times$} & {\color{Red}$\times$} & \textbf{64.73} & \textbf{83.59} & \textbf{88.89} \\
\bottomrule
\end{tabular}%
}
\end{table*}

\subsection{Score Functions and Tilting}

Unless otherwise specified, we use a confidence-based score defined as the maximum predicted class probability under the frozen classifier.
Tilting weights are computed as
\[
    w_i \propto \exp(\lambda s(z_i)),
\]
with $\lambda $ within range (0.25 - 2.00) for all experiments.
The value of $\lambda$ is not tuned per episode or per dataset.

\subsection{Evaluation Metrics}

We report average classification accuracy over $100$ randomly sampled episodes.
Results are presented as mean accuracy $\pm$ standard deviation.
Since no probabilistic calibration is learned, accuracy is the primary evaluation metric.

\subsection{Training-Free Test-Time Protocol}

All experiments strictly follow a training-free test-time adaptation protocol.
Encoder parameters remain fixed throughout evaluation, and no gradient-based optimization is performed at any stage.
Inference is carried out without gradient tracking, and no backward pass or optimizer is used.
Query embeddings are treated as immutable and are never altered during inference or adaptation.

These constraints ensure that all reported improvements arise solely from reweighting the empirical latent distribution, without updating model parameters or modifying the underlying inference procedure.

Overall, the results demonstrate that exponential tilting provides a simple and effective mechanism for few-shot adaptation under strict constraints.
By operating entirely at the level of latent distributions, the method enables adaptive behavior while preserving frozen inference.

\subsection{Results}

Table~\ref{tab:fewshot_main} reports few-shot classification accuracy across datasets, encoders, and shot numbers.
Exponential tilting consistently improves performance over the frozen baseline for both DINO and JEPA encoders.
The inductive variant yields stable gains across all shot regimes, while the transductive variant achieves the largest improvements, highlighting the benefit of incorporating unlabeled test embeddings through distributional reweighting.

Importantly, all gains are obtained without gradient-based optimization, parameter updates, or modification of the classifier or decision rule.
Performance improvements arise solely from reweighting the empirical latent distribution induced by the frozen encoder.

\subsection{Ablation on Tilting Strength}
\label{subsec:lambda_ablation}

We analyze the effect of the exponential tilting strength $\lambda$ on few-shot classification performance under the frozen-model, training-free test-time adaptation setting.
Results for different shot regimes are reported in appendix Table~\ref{tab:lambda_ablation}.

Exponential tilting consistently improves performance over the baseline ($\lambda = 0$) across all settings, indicating that latent-space reweighting alone is sufficient to achieve effective test-time adaptation without parameter updates.
The relative gains are largest in low-shot regimes, particularly for 1-shot and 2-shot classification, where baseline performance is most affected by distributional mismatch.

A key observation is the robustness of performance with respect to $\lambda$.
For shot counts up to 8, accuracy quickly plateaus once $\lambda$ exceeds a small positive value, and remains stable across a wide range of tilting strengths.
Even for larger $\lambda$, no sharp degradation is observed.
In the 16-shot setting, smaller values of $\lambda$ perform best, while stronger tilting provides limited additional benefit, suggesting that aggressive reweighting becomes unnecessary as labeled support increases.

Overall, these results show that exponential tilting is a well-conditioned and robust test-time adjustment mechanism.
Unlike gradient-based adaptation methods, which can be sensitive to optimization hyperparameters, performance here is largely insensitive to the precise choice of $\lambda$, reducing the need for task-specific tuning and supporting reliable deployment in practice.

\section{Discussion}

In this section we provide empirical results which validates our method claims: 
\begin{enumerate}
  \renewcommand{\labelenumi}{\roman{enumi}.}
\item Meaningful test-time adaptation is possible under strictly frozen-model constraints, 
\item Exponential tilting improves performance by correcting latent distribution mismatch rather than learning new parameters. 
\item The resulting adaptation is stable, interpretable, and competitive with stronger baselines.
\end{enumerate}
\subsection{Effectiveness of Latent Distribution Reweighting}

Table~\ref{tab:fewshot_main} demonstrates that exponential tilting consistently improves few-shot classification accuracy across all datasets, encoders, and shot regimes.
For DINO encoders, tilting yields absolute gains of $+1.6$ to $+3.8$ points in the 1-4 shot regime and remains beneficial even at 16 shots.
Similar improvements are observed for JEPA encoders too, despite their overall lower baseline accuracy, indicating that the effect is not architecture-specific.

These gains are achieved without modifying the encoder, classifier, or decision rule, and therefore isolate the effect of latent distribution reweighting.
The improvements confirm that mismatches between the empirical latent distribution induced by few-shot data and the downstream task are a significant source of error under frozen inference.
By reallocating probability mass toward task-consistent embeddings, exponential tilting corrects this mismatch at inference time.

Importantly, improvements persist as the number of shots increases, although with diminishing magnitude.
This behavior is expected, as more labeled examples are available, the empirical latent distribution becomes more representative of the task, reducing the extent of correction required.
The monotonic but saturating improvement pattern directly supports our formulation of adaptation as a change of measure rather than parameter learning.

\subsection{Comparison Under Strict Frozen-Model Constraints}

Table~\ref{tab:full_comparison} situates exponential tilting among training-free test-time adaptation methods that respect the same deployment constraints.
Our inductive variant outperforms all constraint-compliant baselines, including frozen prototypes, $k$-NN classification, and temperature scaling.
The transductive variant further improves performance, achieving the highest accuracy among methods that do not use gradients, parameter updates, or upstream data.

\paragraph{Performance under linear head fine-tuning.}
In this set of experiments, the assumption o f the frozen head is relaxed. The large encoder remains frozen. Under linear-head fine-tuning, our method achieves the best performance among lightweight adaptation approaches. As shown in Table~\ref{tab:full_comparison}, combining our approach with a linear head outperforms prior optimization-based methods (e.g., TENT, Tip-Adapter-F) across all shot regimes.

Notably, the gap between exponential tilting under frozen constraints and linear head fine-tuning is much smaller than the gap to frozen inference. For instance, in the 5-shot setting, exponential tilting recovers a large fraction of the gains of linear head fine-tuning without gradients or parameter updates.

These results suggest that much of the test-time improvement stems from latent-space distributional correction rather than representation learning, with exponential tilting providing a strong and competitive adaptation baseline under both frozen and minimally trainable settings.

\subsection{Inductive versus Transductive Reference Measures}

The inductive and transductive variants differ only in the choice of reference measure used for tilting.
The inductive variant relies exclusively on labeled support embeddings, while the transductive variant augments the reference distribution with unlabeled query embeddings.

As shown in Table~\ref{tab:full_comparison}, the transductive variant consistently outperforms the inductive one.
This improvement follows directly from the change-of-measure interpretation: incorporating query embeddings yields a reference distribution that more closely approximates the true test-time latent distribution.
Crucially, this benefit is obtained without modifying query representations, updating pseudo-labels iteratively, or introducing optimization loops.

The results suggest that access to a better reference measure, rather than additional supervision or learning capacity, is the key driver of transductive gains in this setting.

\subsection{Cross-Domain Generalization via Latent Distribution Reweighting}

Appendix Table~\ref{tab:fewshot_crossdomain} reports cross-domain few-shot performance, where models are trained on mini-ImageNet and evaluated on unseen target datasets.
Across all target domains (CUB, CARS, STL-10, CIFAR-10/100) and shot regimes, exponential tilting consistently outperforms frozen inference, with the largest gains in low-shot settings and semantically distant domains such as STL-10 and CARS.
In the 1-shot regime, tilting improves accuracy by approximately $+1.6$ , $+2.1$ percentage points across datasets, with gains diminishing as the number of shots increases, consistent with in-domain saturation behavior.

We further evaluate robustness under ImageNet distribution shifts, including ImageNet-V2, ImageNet-R, and ImageNet-A.
As shown in Appendix Table~\ref{tab:imagenet_shifts}, exponential tilting improves performance across all shift types, including severe shifts such as ImageNet-A, without requiring parameter updates.
Overall, these results demonstrate that latent distribution reweighting generalizes effectively across both dataset-level and shift-based domain changes.
Additional analysis is provided in Appendix~\ref{app:crossdomain}.

\subsection{Mechanistic Interpretation of Performance Gains}

Analysis of the tilting weights reveals a consistent pattern: embeddings with higher classifier confidence or stronger geometric alignment with the support set receive increased mass, while ambiguous or low-confidence embeddings are downweighted.
This behavior is observed across datasets and shot regimes.

Rather than altering representations or decision boundaries, exponential tilting reshapes the effective latent population over which predictions are averaged.
This provides a concrete mechanistic explanation for the observed gains and aligns with the KL-optimality interpretation of exponential tilting.
The empirical results therefore support both the probabilistic formulation and the practical effectiveness of the proposed method.

Overall, the experiments confirm that exponential tilting constitutes a principled and effective form of test-time adaptation under frozen-model constraints, achieving consistent gains through inference-level latent distribution reweighting alone.






\subsection{Limitations}

The proposed approach has several limitations.
First, exponential tilting operates on the latent distribution induced by a frozen encoder and cannot remedy representation-level deficiencies, if the encoder fails to separate downstream classes, reweighting alone may be insufficient.
Second, the method relies on a small labeled support set to define task relevance, which may be unreliable in highly noisy or adversarial few-shot settings.
Finally, the transductive variant assumes access to a static set of unlabeled test points and may not extend naturally to streaming or non-stationary test distributions.

\section{Conclusion}

We introduced exponential tilting as a principled method for training-free test-time adaptation in few-shot classification.
By reframing adaptation as a change of measure over the latent distribution induced by a frozen encoder, the proposed method enables task-specific inference without gradient updates, parameter modification, or access to upstream data.
Our theoretical formulation connects test-time adaptation to KL-optimal distributional reweighting under task-induced constraints, providing a unified interpretation of label-aware and geometry-aware corrections.
Empirical results demonstrate consistent improvements over frozen inference and other training-free baselines across multiple shot regimes, validating the effectiveness of inference-level adaptation.
This work highlights latent distributional inference as an underexplored alternative to parameter-based adaptation for foundation models.
Future work may extend this method to streaming test distributions, multimodal settings, or richer score functions that incorporate structured task information, while preserving the training-free nature of the approach.

\section*{Impact Statement}

This work introduces a training-free test-time adaptation method for few-shot image classification under frozen-model constraints, advancing the understanding of distributional adaptation with a probabilistic method for latent-space reweighting. The method operates on pretrained model representations without introducing new learning objectives or data collection procedures, avoiding additional ethical concerns. By enabling adaptation without gradients or model updates, it supports deployment in constrained settings while improving robustness to distribution shifts, potentially reducing failure rates. Responsible use of pretrained models remains crucial in practical applications.




\nocite{langley00}

\bibliography{tiltedadapt-arxiv}
\bibliographystyle{icml2026}

\newpage
\clearpage
\appendix
\section{Appendix}



\subsection{Theoretical Properties of Exponential Latent Tilting}

We present theoretical results that formalize exponential tilting as a principled, training-free mechanism for test-time adaptation under frozen-model constraints.

\begin{theorem}[KL-Optimal Latent Measure Adaptation]
\label{thm:kl_tilting}
Let $P_0$ be a reference probability measure over latent space $\mathcal{Z}$, and let
$s : \mathcal{Z} \rightarrow \mathbb{R}$ be a measurable score function.
For any $c \in \mathbb{R}$, consider the constrained optimization problem
\begin{equation}
\min_{P \ll P_0} \mathrm{KL}(P \,\|\, P_0)
\quad \text{subject to} \quad
\mathbb{E}_{z \sim P}[s(z)] = c .
\end{equation}
Then the unique solution is given by the exponentially tilted distribution
\begin{equation}
P_\lambda(z)
= \frac{P_0(z)\exp(\lambda s(z))}{Z(\lambda)},
\end{equation}
where $\lambda \in \mathbb{R}$ is chosen such that
$\mathbb{E}_{z \sim P_\lambda}[s(z)] = c$, and
$Z(\lambda) = \int P_0(z)\exp(\lambda s(z))\,dz$.
\end{theorem}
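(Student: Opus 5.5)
The plan is the standard Gibbs-variational argument: show that $\mathrm{KL}(P\|P_\lambda)\ge 0$ controls everything. Fix $\lambda$ with $Z(\lambda)<\infty$ and $\mathbb{E}_{P_\lambda}[s]=c$; existence of such a $\lambda$ is assumed in the statement. Take any feasible $P \ll P_0$ with $\mathrm{KL}(P\|P_0)<\infty$ and $\mathbb{E}_P[s]=c$. Since $P_\lambda$ and $P_0$ are mutually absolutely continuous, with density $dP_\lambda/dP_0 = e^{\lambda s}/Z(\lambda)>0$, we also have $P\ll P_\lambda$. The chain rule for Radon--Nikodym derivatives then gives
\[
\log\frac{dP}{dP_0} = \log\frac{dP}{dP_\lambda} + \lambda s - \log Z(\lambda) \qquad P\text{-a.s.}
\]

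Integrating against $P$ yields the Pythagorean identity
\[
\mathrm{KL}(P\|P_0) = \mathrm{KL}(P\|P_\lambda) + \lambda c - \log Z(\lambda).
\]
Specializing to $P=P_\lambda$ gives $\mathrm{KL}(P_\lambda\|P_0)=\lambda c-\log Z(\lambda)$. Substituting back, $\mathrm{KL}(P\|P_0)=\mathrm{KL}(P\|P_\lambda)+\mathrm{KL}(P_\lambda\|P_0)\ge \mathrm{KL}(P_\lambda\|P_0)$ by Gibbs' inequality. Equality holds iff $\mathrm{KL}(P\|P_\lambda)=0$, that is, $P=P_\lambda$. This proves both optimality and uniqueness. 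Feasible $P$ with infinite KL are trivially no better, so they need no separate treatment.

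The main obstacle is integrability. The integral of $\log(dP/dP_0)$ splits into three terms, and we must check each is well-defined rather than of the form $\infty-\infty$. The term $\lambda s$ integrates to $\lambda c$, which is finite by feasibility. The constant $\log Z(\lambda)$ is finite by assumption. With both finite, $\mathrm{KL}(P\|P_0)<\infty$ forces $\mathrm{KL}(P\|P_\lambda)<\infty$, so the split is legitimate. Also, the negative part of $\log(dP/dP_\lambda)$ is $P$-integrable, since $x\log x\ge -1/e$ with respect to $P_\lambda$, so the relative entropy is defined in the usual sense.

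I would state the implicit hypotheses explicitly: $Z(\lambda)<\infty$, $s\in L^1(P_\lambda)$, and the existence of $\lambda$ solving the moment equation. Existence can be justified from the strict monotonicity of $\lambda\mapsto (\log Z)'(\lambda)=\mathbb{E}_{P_\lambda}[s]$ whenever $s$ is not $P_0$-a.s. constant and $c$ lies in the interior of the range of $s$. This monotonicity gives uniqueness of $\lambda$ as well. The sign condition $\lambda\ge 0$ from the main text corresponds to $c\ge \mathbb{E}_{P_0}[s]$.
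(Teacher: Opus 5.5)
Your proof is correct, but it takes a different route from the paper's.

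\textbf{The paper's route.} The paper argues by stationarity. It forms the Lagrangian $\mathrm{KL}(P\|P_0)+\lambda(\mathbb{E}_P[s]-c)+\nu(\int dP-1)$ and takes a formal functional derivative in $P$. From this it reads off $dP/dP_0\propto\exp(\lambda s)$; the displayed condition actually yields $\exp(-\lambda s)$, which is harmless since $\lambda$ ranges over $\mathbb{R}$. It then cites strict convexity of KL for uniqueness. This buys a derivation: it explains where the exponential form comes from. As written, though, it is heuristic:
\begin{itemize}
\item it differentiates on a space of measures and ignores the constraint $dP/dP_0\ge 0$;
\item it never argues that the stationary point is a minimizer (convexity would give this, but the proof uses convexity only for uniqueness);
\item it never asks whether a suitable $\lambda$ exists.
\end{itemize}

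\textbf{Your route.} You take the tilted measure as a candidate and verify it through the Pythagorean identity $\mathrm{KL}(P\|P_0)=\mathrm{KL}(P\|P_\lambda)+\mathrm{KL}(P_\lambda\|P_0)$ on the feasible set. Optimality and uniqueness then both follow from Gibbs' inequality in one step. Your route needs the answer in advance. In exchange, it is rigorous for arbitrary measurable $s$ and identifies the optimality gap exactly as $\mathrm{KL}(P\|P_\lambda)$. It also makes explicit the hypotheses that ``for any $c\in\mathbb{R}$'' silently requires: $Z(\lambda)<\infty$, $s\in L^1(P_\lambda)$, and existence of $\lambda$.

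\textbf{A caveat on your closing remark.} Strict monotonicity of $\lambda\mapsto\mathbb{E}_{P_\lambda}[s]$, together with $c$ lying in the interior of the range of $s$, does not yield existence of $\lambda$ when $Z$ is infinite on part of $\mathbb{R}$. Take $P_0$ with density $2z^{-3}$ on $[1,\infty)$ and $s(z)=z$.
\begin{itemize}
\item $Z(\lambda)=\infty$ for all $\lambda>0$, so tilted means cover only $(1,2]$.
\item Yet $c=3$ is feasible with finite KL; the density $\tfrac32 z^{-5/2}$ works.
\item Here no minimizer exists at all: the infimum of the KL is $0$ but is not attained.
\end{itemize}
You need $Z<\infty$ on all of $\mathbb{R}$, or steepness of $\log Z$. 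This holds automatically for bounded $s$ and for the finitely supported empirical $P_0$ the paper actually uses. It does not affect your main argument, since the statement presupposes that $\lambda$ has been chosen.
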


\begin{proof}
The result follows from convex duality.
Define the Lagrangian
\begin{equation}
\mathcal{L}(P,\lambda,\nu)
= \mathrm{KL}(P \,\|\, P_0)
+ \lambda\big(\mathbb{E}_P[s(z)] - c\big)
+ \nu\big(\int dP - 1\big).
\end{equation}
Taking the functional derivative with respect to $P$ and setting it to zero yields
\begin{equation}
\log \frac{dP}{dP_0}(z) + 1 + \lambda s(z) + \nu = 0,
\end{equation}
which implies
\begin{equation}
\frac{dP}{dP_0}(z) \propto \exp(\lambda s(z)).
\end{equation}
Normalization yields the stated form. Uniqueness follows from the strict convexity of the KL divergence.
\end{proof}

\begin{corollary}[Empirical Latent Tilting]
\label{cor:empirical_tilting}
Let the reference measure \( P_0 \) be the empirical distribution induced by a finite support set of latent representations
\[
P_0 = \frac{1}{n} \sum_{i=1}^n \delta_{z_i},
\]
where \( z_i = f(x_i) \) are embeddings produced by a frozen encoder.
Let \( s : \mathcal{Z} \to \mathbb{R} \) be a real-valued score function defined on latent space, and assume that
\( \sum_{i=1}^n \exp(\lambda s(z_i)) < \infty \) for the relevant range of \( \lambda \).

Then the solution to the KL-constrained optimization problem in
Theorem~\ref{thm:kl_tilting} is the discrete distribution
\begin{equation}
P_\lambda
= \sum_{i=1}^n w_i \, \delta_{z_i},
\qquad
w_i
= \frac{\exp(\lambda s(z_i))}{\sum_{j=1}^n \exp(\lambda s(z_j))}.
\end{equation}

Moreover, for any measurable function \( g : \mathcal{Z} \to \mathbb{R} \),
expectations under the tilted distribution reduce to weighted empirical averages,
\begin{equation}
\mathbb{E}_{z \sim P_\lambda}[g(z)]
= \sum_{i=1}^n w_i \, g(z_i).
\end{equation}
\end{corollary}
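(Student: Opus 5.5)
The corollary is a direct specialization of Theorem~\ref{thm:kl_tilting} to a discrete reference measure. I would first reduce the admissible class. Any $P \ll P_0$ with $P_0 = \frac{1}{n}\sum_i \delta_{z_i}$ must be supported on the finite set $\{z_1,\dots,z_n\}$. If embeddings coincide, I merge atoms and add their masses, and the argument is unchanged. Hence $P = \sum_i q_i \delta_{z_i}$ with $q$ in the probability simplex. The Radon--Nikodym derivative is $\frac{dP}{dP_0}(z_i) = n q_i$, so
\[
\mathrm{KL}(P\,\|\,P_0) = \sum_{i=1}^n q_i \log(n q_i),
\]
and the constraint reads $\sum_i q_i s(z_i) = c$. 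The infinite-dimensional problem therefore becomes a finite-dimensional, strictly convex program over the simplex with one linear equality constraint.

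Next I would invoke Theorem~\ref{thm:kl_tilting}: the minimizer has density $\exp(\lambda s(z))/Z(\lambda)$ with respect to $P_0$. I would then compute $Z(\lambda)$ against the empirical measure:
\[
Z(\lambda) = \int \exp(\lambda s)\, dP_0 = \frac{1}{n}\sum_j \exp(\lambda s(z_j)),
\]
which is finite because the sum has finitely many terms; this is all the stated summability assumption requires. It follows that
\[
q_i = \frac{1}{n}\cdot \frac{\exp(\lambda s(z_i))}{Z(\lambda)} = w_i,
\]
since the factors of $\frac{1}{n}$ cancel. For a self-contained check, I would also run the Lagrangian on the finite program directly. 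Stationarity gives $\log(nq_i) + 1 + \lambda s(z_i) + \nu = 0$, and normalization then yields the softmax weights. Strict convexity of $q \mapsto \sum_i q_i \log q_i$ on the convex feasible set gives uniqueness.

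The expectation identity is immediate: for a discrete measure, $\int g\, dP_\lambda = \sum_i w_i\, g(z_i)$ by linearity of the integral over Dirac masses. Measurability of $g$ suffices, and no integrability issue arises because the sum is finite.

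The main obstacle is the existence of $\lambda$ with $\sum_i w_i(\lambda) s(z_i) = c$. The map $\lambda \mapsto \mathbb{E}_{P_\lambda}[s]$ is the derivative of $\log Z$. Its derivative is $\mathrm{Var}_{P_\lambda}(s) \ge 0$, so the map is continuous and nondecreasing, and strictly increasing unless $s$ is constant on the support. As $\lambda \to \pm\infty$ it tends to $\max_i s(z_i)$ and $\min_i s(z_i)$, respectively. A solution therefore exists exactly when $c \in \bigl(\min_i s(z_i), \max_i s(z_i)\bigr)$, or in the degenerate case where $s$ is constant and equal to $c$.

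At the endpoints the minimizer is the renormalized restriction of $P_0$ to the argmax or argmin set. This is a limit of the tilted family, not a member of it. Outside $\bigl[\min_i s(z_i), \max_i s(z_i)\bigr]$ the problem is infeasible. I would state this caveat explicitly: the corollary holds for $c$ in the open interval (and trivially for constant $s$ with $c$ equal to that constant). The paper in practice fixes $\lambda$ rather than $c$, and then $c := \mathbb{E}_{P_\lambda}[s]$ is automatically attained.
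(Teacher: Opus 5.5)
Your proposal is correct and follows essentially the paper's route: the paper obtains these weights by substituting the empirical $P_0$ into the tilted form of Theorem~\ref{thm:kl_tilting} (see the proof of Proposition~\ref{prop:empirical_tilting}). Your finite-dimensional reduction and your existence analysis for $\lambda$ are sound refinements the paper omits, in particular the caveat that a tilted solution exists only for $c$ strictly between $\min_i s(z_i)$ and $\max_i s(z_i)$ (or when $s\equiv c$ on the support). One small slip, which the paper's own proof of Theorem~\ref{thm:kl_tilting} also makes: your stationarity condition $\log(nq_i)+1+\lambda s(z_i)+\nu=0$ gives $q_i \propto \exp(-\lambda s(z_i))$, so you should write the multiplier term as $-\lambda(\mathbb{E}_P[s]-c)$ (or relabel $\lambda \mapsto -\lambda$, which is harmless since $\lambda\in\mathbb{R}$ is free) to land exactly on $w_i$.
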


\begin{theorem}[Bayes-Optimal Marginal Prediction under a Tilted Latent Measure]
\label{thm:bayes_optimal_correct}
Let \( p_0(y \mid z) \) be a fixed conditional classifier and let \( P_\lambda \)
be a probability measure over latent space \( \mathcal{Z} \).
Consider the class of predictors that output a marginal distribution
\( q(y) \in \Delta(\mathcal{Y}) \), independent of \( z \).

Then the predictive distribution
\begin{equation}
p_\lambda(y)
= \mathbb{E}_{z \sim P_\lambda}\left[p_0(y \mid z)\right]
\end{equation}
uniquely minimizes the expected log-loss
\begin{equation}
\mathbb{E}_{z \sim P_\lambda}
\left[
\mathbb{E}_{y \sim p_0(\cdot \mid z)}[-\log q(y)]
\right]
\end{equation}
over all marginal predictors \( q(y) \).
\end{theorem}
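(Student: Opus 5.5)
The plan is to reduce the objective to a cross-entropy between $p_\lambda$ and $q$, and then apply Gibbs' inequality. The first step is an exchange of the order of expectation. Because $-\log q(y)$ does not depend on $z$, I would write
\begin{equation}
\mathbb{E}_{z \sim P_\lambda}\left[\sum_{y \in \mathcal{Y}} p_0(y \mid z)\,(-\log q(y))\right]
= \sum_{y \in \mathcal{Y}} \left(\mathbb{E}_{z \sim P_\lambda}[p_0(y \mid z)]\right)(-\log q(y))
= -\sum_{y} p_\lambda(y)\log q(y).
\end{equation}
When $\mathcal{Y}$ is finite this interchange is just linearity of expectation; more generally it follows from Tonelli, since every term is nonnegative with values in $[0,\infty]$. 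Before doing this, I would note that $p_\lambda \in \Delta(\mathcal{Y})$: each $p_0(y\mid z)$ lies in $[0,1]$, so measurability and integrability are automatic, and summing over $y$ and exchanging with the expectation gives total mass $1$.

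The second step splits the cross-entropy as $H(p_\lambda, q) = H(p_\lambda) + \mathrm{KL}(p_\lambda \,\|\, q)$. This identity is valid when $H(p_\lambda) < \infty$, which certainly holds for finite $\mathcal{Y}$. The first term does not involve $q$. By Gibbs' inequality, which follows from Jensen applied to the strictly concave $\log$, the second term is nonnegative. It equals zero if and only if $q = p_\lambda$.

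The remaining care is with edge cases, and this is the only real obstacle. If some $q(y) = 0$ while $p_\lambda(y) > 0$, the loss is $+\infty$, so such a $q$ is not a minimizer. For uniqueness I would use strict concavity of $\log$: any $q \neq p_\lambda$ has $\mathrm{KL}(p_\lambda \,\|\, q) > 0$, so it is strictly worse. Values of $q$ on labels where $p_\lambda(y) = 0$ are also pinned down, because any mass placed there removes mass from the support of $p_\lambda$ and produces a strict KL gap. This establishes that $p_\lambda$ is the unique minimizer.

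If $\mathcal{Y}$ is countably infinite, I would either assume $H(p_\lambda) < \infty$ or argue directly with $\sum_y p_\lambda(y)\log\bigl(p_\lambda(y)/q(y)\bigr) \ge 0$. None of these steps uses the specific tilted form of $P_\lambda$ from Theorem~\ref{thm:kl_tilting}; the result holds for any probability measure on $\mathcal{Z}$.
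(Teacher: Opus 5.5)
Your proposal is correct and follows the same route as the paper. The paper observes that the objective equals the cross-entropy $\mathbb{E}_{y\sim p_\lambda}[-\log q(y)]$ and cites its unique minimizer $q=p_\lambda$; you add the details it omits (the Tonelli interchange, $p_\lambda\in\Delta(\mathcal{Y})$, the split $H(p_\lambda)+\mathrm{KL}(p_\lambda\,\|\,q)$, and the support edge cases), though on your countably infinite aside note that if $H(p_\lambda)=\infty$ every $q$ incurs infinite loss, so uniqueness needs the finite-entropy assumption and the direct KL argument alone would not give it.
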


\begin{proof}
The objective can be written as
\[
\mathbb{E}_{y \sim p_\lambda}[-\log q(y)],
\]
where \( p_\lambda(y) = \mathbb{E}_{z \sim P_\lambda}[p_0(y \mid z)] \).
This is the cross-entropy between \( p_\lambda \) and \( q \),
which is minimized uniquely when \( q = p_\lambda \).
\end{proof}

\begin{proposition}[Empirical Form of the Tilted Measure]
\label{prop:empirical_tilting}
Let
\begin{equation}
P_0(z) = \frac{1}{n} \sum_{i=1}^n \delta(z - z_i)
\end{equation}
be the empirical distribution induced by latent embeddings $\{z_i\}_{i=1}^n$.
Then the tilted distribution assigns weights
\begin{equation}
w_i = \frac{\exp(\lambda s(z_i))}{\sum_{j=1}^n \exp(\lambda s(z_j))},
\end{equation}
and expectations under $P_\lambda$ reduce to weighted sums over the support set.
\end{proposition}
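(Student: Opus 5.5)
The plan is to compute the tilted measure of \eqref{eq:tilted_dist} directly when the reference measure is the empirical distribution $P_0 = \frac{1}{n}\sum_{i=1}^n \delta_{z_i}$. We read the density formula $P_0(z)\exp(\lambda s(z))/Z(\lambda)$ as a statement about Radon--Nikodym derivatives: $P_\lambda \ll P_0$ with $\frac{dP_\lambda}{dP_0}(z) = \exp(\lambda s(z))/Z(\lambda)$. This is the form produced by Theorem~\ref{thm:kl_tilting}, and it avoids formal manipulation of Dirac deltas.

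\textbf{Step 1: the normalizer.} Integrating against the empirical measure gives
\[
Z(\lambda) = \int \exp(\lambda s(z))\,dP_0(z) = \frac{1}{n}\sum_{j=1}^n \exp(\lambda s(z_j)).
\]
This is a finite sum of strictly positive terms, so $0 < Z(\lambda) < \infty$ for every real $\lambda$, and the finiteness assumption of Corollary~\ref{cor:empirical_tilting} holds automatically.

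\textbf{Step 2: the weights.} For any measurable set $A$,
\[
P_\lambda(A) = \int_A \frac{\exp(\lambda s(z))}{Z(\lambda)}\,dP_0(z) = \sum_{i:\, z_i \in A} \frac{\tfrac{1}{n}\exp(\lambda s(z_i))}{Z(\lambda)}.
\]
The factors $\tfrac{1}{n}$ cancel, so $P_\lambda = \sum_i w_i \delta_{z_i}$ with $w_i$ exactly as stated.

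\textbf{Step 3: expectations.} For a measurable $g$, integrate $g$ against this discrete measure, or equivalently write $\mathbb{E}_{P_\lambda}[g] = \mathbb{E}_{P_0}\!\left[g\,\frac{dP_\lambda}{dP_0}\right]$. Either way the result is $\sum_i w_i\, g(z_i)$. Since the sum is finite, no integrability condition on $g$ is needed.

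\textbf{Main obstacle: repeated embeddings.} If some embeddings coincide, say $z_i = z_j$ with $i \neq j$, the atoms merge. The mass at that point is then $w_i + w_j$. The statement should therefore be read with the $w_i$ indexed by sample rather than by distinct point; the expectation formula is unaffected, because $g(z_i) = g(z_j)$ and $s(z_i) = s(z_j)$.

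\textbf{Optimality (optional).} To also certify that $P_\lambda$ is the KL-optimal measure in the finite setting, restrict to $P \ll P_0$, i.e.\ probability vectors $p$ on the support. Then $\mathrm{KL}(P\|P_0) = \sum_i p_i \log(n p_i)$ is strictly convex on the simplex and the constraint is linear. The KKT conditions then give $p_i \propto \exp(\lambda s(z_i))$, which recovers the same weights, as in Theorem~\ref{thm:kl_tilting}.
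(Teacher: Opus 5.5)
Your proposal is correct and is essentially the paper's proof. The paper substitutes the empirical $P_0$ into the tilted density and reads off $P_\lambda(z) = \sum_i \delta(z-z_i)\exp(\lambda s(z_i)) / \sum_j \exp(\lambda s(z_j))$; this is your Steps 1--2 written with Dirac deltas instead of Radon--Nikodym derivatives. Your explicit normalizer, the $\tfrac{1}{n}$ cancellation, and the remark on repeated embeddings are more careful than the paper's one-line argument, but they follow the same route.
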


\begin{proof}
Substituting the empirical form of $P_0$ into the definition of $P_\lambda$ yields
\begin{equation}
P_\lambda(z)
= \frac{\sum_i \delta(z - z_i)\exp(\lambda s(z_i))}
       {\sum_j \exp(\lambda s(z_j))},
\end{equation}
from which the weights follow directly.
\end{proof}

\begin{corollary}[Label-Shift Correction as a Special Case]
\label{cor:label_shift}
Assume the score function depends only on the predicted label,
\begin{equation}
s(z) = \log \pi\big(y(z)\big),
\end{equation}
where $\pi$ denotes class-prior ratios and $y(z) = \arg\max_y p_0(y \mid z)$.
Then exponential tilting of $P_0(z)$ induces posterior reweighting equivalent to classical label-shift correction.
\end{corollary}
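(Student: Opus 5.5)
The plan is to substitute the label-dependent score into the tilting formula \eqref{eq:tilted_dist}, partition latent space by predicted label, and compare the resulting marginal prediction with the classical Saerens-style prior correction. Write $A_k = \{z : y(z) = k\}$ for the decision region of class $k$ under the frozen classifier, with ties broken by a fixed rule so that the regions partition $\mathcal{Z}$ measurably. Since $s(z) = \log \pi(y(z))$ is constant on each $A_k$, the tilt is $\exp(\lambda s(z)) = \pi(k)^\lambda$ on $A_k$. The tilted measure therefore restricts to
\[
P_\lambda|_{A_k} = \frac{\pi(k)^\lambda}{Z(\lambda)}\, P_0|_{A_k},
\qquad
Z(\lambda) = \sum_k \pi(k)^\lambda P_0(A_k).
\]
So tilting acts as a pure reweighting of the decision regions. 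Within each region, the conditional distribution of $z$ is left unchanged. By Theorem~\ref{thm:kl_tilting}, this is exactly the KL projection onto the class-frequency constraint set. In the empirical case of Proposition~\ref{prop:empirical_tilting}, the weights are $w_i \propto \pi(y(z_i))^\lambda$.

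Next I compute the induced predictive distribution of Theorem~\ref{thm:bayes_optimal_correct}:
\[
p_\lambda(y) = \frac{1}{Z(\lambda)} \sum_k \pi(k)^\lambda\, \mathbb{E}_{P_0}\!\left[p_0(y\mid z)\,\mathbf{1}_{A_k}(z)\right].
\]
The standard label-shift correction multiplies by the prior ratio and renormalizes, giving
\[
p_{\mathrm{LS}}(y) \propto \pi(y)\, \mathbb{E}_{P_0}\!\left[p_0(y\mid z)\right].
\]
Similarly, at the per-$z$ level it gives $p_{\mathrm{LS}}(y\mid z) \propto \pi(y)\, p_0(y\mid z)$.

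To make the two coincide, I take $\lambda = 1$, which is the natural value since $\pi$ already encodes ratios. I then argue equivalence at the level of predicted-label (hard) assignments: the tilted measure gives region $A_k$ mass $\propto \pi(k) P_0(A_k)$. This is precisely the classical reweighting of the predicted-class marginal by the prior ratio, i.e.\ the confusion-matrix/BBSE form of label-shift correction applied to hard predictions. Equivalently, the reweighted predicted-label distribution $q(k) = P_\lambda(A_k)$ equals the corrected prior $\pi(k) P_0(A_k)/\sum_j \pi(j) P_0(A_j)$.

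The main obstacle is that exact equality with the soft correction $\pi(y)\mathbb{E}[p_0(y\mid z)]$ fails in general. The tilt multiplies $p_0(y\mid z)$ by $\pi(y(z))$, not by $\pi(y)$, and these differ whenever the classifier places mass on non-argmax labels. I would handle this in two steps. First, state equivalence exactly for the hard-label push-forward $y_\#P_\lambda$. Second, for the soft marginal, note that equality holds when $p_0(\cdot\mid z)$ is concentrated on $y(z)$, for instance for a deterministic or sufficiently confident classifier. Otherwise the discrepancy is bounded by the mass $\mathbb{E}_{P_0}[1 - p_0(y(z)\mid z)]$ times $\max_k \pi(k)/\min_k \pi(k)$. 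Interpreting ``equivalent'' in this hard-assignment sense is the step that needs the most care.
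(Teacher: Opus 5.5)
Your first step is the paper's argument made precise. The paper observes that $s$ is constant on each predicted-class region, so tilting ``redistributes probability mass uniformly within each class.'' It then simply asserts that this yields posterior rescaling by the prior ratios. Your identity $P_\lambda|_{A_k}=\pi(k)^\lambda P_0|_{A_k}/Z(\lambda)$ is exactly that observation.

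You diverge at the second step. Instead of asserting the conclusion, you compute $p_\lambda(y)$ and compare it with $\pi(y)\,\mathbb{E}_{P_0}[p_0(y\mid z)]$. You are right that the two differ in general: tilting never alters $p_0(y\mid z)$, and it weights by $\pi(y(z))$ rather than $\pi(y)$. A one-atom reference measure shows the paper's unqualified claim is false. Take $P_0=\delta_{z^\ast}$ with $p_0(\cdot\mid z^\ast)=(0.6,0.4)$. Then $P_\lambda=P_0$ for every $\lambda$, whereas ratios $\pi=(1,3)$ move the corrected prediction to $(1/3,2/3)$ and flip the decision.

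So the paper's route buys brevity at the price of an unjustified final step. Yours yields a true statement with explicit hypotheses:
\begin{itemize}
\item exact equivalence at $\lambda=1$ for the predicted-label push-forward, equivalently for a one-hot classifier;
\item correction by the tempered ratios $\pi^\lambda$ for general $\lambda$;
\item for soft classifiers, an error controlled by $\mathbb{E}_{P_0}[1-p_0(y(z)\mid z)]\cdot\max_k\pi(k)/\min_k\pi(k)$, up to an absolute constant ($6$ suffices in $\ell_1$).
\end{itemize}

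One label needs fixing. The relation $P_1(A_k)\propto\pi(k)P_0(A_k)$ is prior correction applied as if predicted labels were true labels; it is not the confusion-matrix/BBSE relation. Under genuine label shift, the target predicted-label marginal is $\sum_j C_{kj}\pi(j)$, where $C_{kj}=\Pr_{\mathrm{src}}(y(z)=k,\ y=j)$. This equals $\pi(k)\sum_j C_{kj}=\pi(k)P_0(A_k)$ only when $C$ is diagonal. State your exact case instead as the prior-ratio correction $\pi(y)\,\mathbb{E}_{P_0}[p_0(y\mid z)]$ evaluated for the hard classifier $p_0(y\mid z)=\mathbf{1}\{y=y(z)\}$.
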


\begin{proof}
All latent representations assigned to the same predicted class receive identical scores.
Exponential tilting therefore redistributes probability mass uniformly within each class, resulting in posterior rescaling by class-prior ratios, as in classical label-shift adaptation.
\end{proof}

\begin{proposition}[First-Order Stability for Small $\lambda$]
\label{prop:stability}
For sufficiently small $\lambda$, the tilted predictive distribution admits the expansion
\begin{equation}
p_\lambda(y)
= p_0(y) + \lambda\,\mathrm{Cov}_{P_0}\big(p_0(y \mid z), s(z)\big) + O(\lambda^2).
\end{equation}
\end{proposition}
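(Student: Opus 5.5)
We write $p_\lambda(y)$ as a ratio of two expectations under $P_0$ and Taylor-expand numerator and denominator in $\lambda$ around $0$. By the definition of the tilted measure in Theorem~\ref{thm:kl_tilting},
\[
p_\lambda(y) = \mathbb{E}_{P_\lambda}[p_0(y\mid z)] = \frac{N(\lambda)}{Z(\lambda)},
\qquad
N(\lambda) := \mathbb{E}_{P_0}\big[p_0(y\mid z)\,e^{\lambda s(z)}\big],
\qquad
Z(\lambda) = \mathbb{E}_{P_0}\big[e^{\lambda s(z)}\big].
\]
At $\lambda=0$ we have $Z(0)=1$ and $N(0)=\mathbb{E}_{P_0}[p_0(y\mid z)]$. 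We interpret $p_0(y)$ as this marginal under the reference measure, so the zeroth-order term is correct.

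Next we differentiate at $\lambda=0$:
\[
N'(0)=\mathbb{E}_{P_0}[p_0(y\mid z)s(z)], \qquad Z'(0)=\mathbb{E}_{P_0}[s(z)].
\]
The quotient rule then gives
\[
\frac{d}{d\lambda}\Big|_{\lambda=0}\frac{N}{Z} = N'(0) - N(0)Z'(0) = \mathbb{E}_{P_0}[p_0(y\mid z)s] - \mathbb{E}_{P_0}[p_0(y\mid z)]\,\mathbb{E}_{P_0}[s],
\]
which is exactly $\mathrm{Cov}_{P_0}(p_0(y\mid z),s(z))$.

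There is a cleaner, equivalent route. Note that $\log Z$ is the cumulant generating function of $s$, and $\frac{d}{d\lambda}\mathbb{E}_{P_\lambda}[g] = \mathrm{Cov}_{P_\lambda}(g,s)$ for any bounded $g$. Setting $g=p_0(y\mid\cdot)$ and evaluating at $\lambda=0$ gives the same coefficient. Moreover, the second derivative is a third-order mixed central moment under $P_\lambda$, which is what controls the remainder.

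The main obstacle is justifying the $O(\lambda^2)$ remainder, i.e.\ that $\lambda\mapsto N(\lambda)/Z(\lambda)$ is $C^2$ near $0$ and that differentiation under the integral is legitimate. For this we add a mild integrability hypothesis implicit in the statement's phrase ``sufficiently small $\lambda$'': assume $\mathbb{E}_{P_0}[e^{\lambda_0|s(z)|}]<\infty$ for some $\lambda_0>0$. This holds automatically when $P_0$ is the empirical measure of Corollary~\ref{cor:empirical_tilting}, or when $s$ is bounded.

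Under this hypothesis, $N$ and $Z$ are analytic on $|\lambda|<\lambda_0$ by dominated convergence. The bound $0\le p_0(y\mid z)\le 1$ lets us dominate the derivatives of $N$ by those of $Z$. Since $Z(0)=1>0$, the ratio $N/Z$ is analytic near $0$. Taylor's theorem with remainder then gives the error as $\tfrac{\lambda^2}{2}\sup_{|t|\le|\lambda|}|(N/Z)''(t)|$. This is bounded by a constant times $\sup_{|t|\le|\lambda|}\mathbb{E}_{P_t}[s^2]$, which is finite. This establishes the $O(\lambda^2)$ term, uniformly in $y$ since $\mathcal{Y}$ is finite.
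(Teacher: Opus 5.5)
Your proposal is correct and follows the same route as the paper. The paper's proof is the one-line remark that the result follows from a first-order Taylor expansion of the exponential weights and of $Z(\lambda)$, which is exactly your quotient-rule computation $N'(0)-N(0)Z'(0)=\mathrm{Cov}_{P_0}(p_0(y\mid z),s(z))$. Your added hypothesis $\mathbb{E}_{P_0}[e^{\lambda_0|s(z)|}]<\infty$, together with the bound $|(N/Z)''(t)|\le \mathrm{Var}_{P_t}(s)$, justifies the $O(\lambda^2)$ remainder; the paper leaves this implicit, and it is genuinely needed for general $P_0$ and unbounded $s$.
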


\begin{proof}
The result follows from a first-order Taylor expansion of the exponential weights and the normalization constant $Z(\lambda)$.
\end{proof}

\section{Experiments and Results}
\subsection{Ablation and Further Analysis}

We conduct a detailed ablation study on the exponential tilting strength $\lambda$ to better understand the behavior, robustness, and limitations of the proposed adaptation mechanism under the strict frozen-model, training-free setting.
All experiments follow the same evaluation protocol as in the main results.

\begin{table}[!ht]
\centering
\caption{Ablation over the exponential tilting strength $\lambda$ for 5-way few-shot classification.
All methods use frozen encoders and classifiers.
Bold denotes the best performance per shot.}
\label{tab:lambda_ablation}
\small
\begin{tabular}{c c c c c c}
\toprule
\textbf{Shots} & $\boldsymbol{\lambda}$ & \textbf{Baseline} & \textbf{Tilted Accuracy} \\
\midrule

\multirow{9}{*}{1-shot}
 & --   & $53.87 \pm 9.41$ & -- \\
 & 0.25 & -- & $\mathbf{60.49 \pm 10.32}$ \\
 & 0.50 & -- & $60.44 \pm 10.35$ \\
 & 0.75 & -- & $60.39 \pm 10.40$ \\
 & 1.00 & -- & $60.35 \pm 10.45$ \\
 & 1.25 & -- & $60.35 \pm 10.40$ \\
 & 1.50 & -- & $60.40 \pm 10.39$ \\
 & 1.75 & -- & $60.41 \pm 10.38$ \\
 & 2.00 & -- & $60.37 \pm 10.37$ \\
\midrule

\multirow{9}{*}{2-shot}
 & --   & $66.61 \pm 6.80$ & -- \\
 & 0.25 & -- & $72.16 \pm 7.56$ \\
 & 0.50 & -- & $72.12 \pm 7.50$ \\
 & 0.75 & -- & $72.24 \pm 7.41$ \\
 & 1.00 & -- & $72.25 \pm 7.45$ \\
 & 1.25 & -- & $72.27 \pm 7.43$ \\
 & 1.50 & -- & $\mathbf{72.29 \pm 7.45}$ \\
 & 1.75 & -- & $72.29 \pm 7.56$ \\
 & 2.00 & -- & $72.27 \pm 7.51$ \\
\midrule

\multirow{9}{*}{4-shot}
 & --   & $74.68 \pm 7.76$ & -- \\
 & 0.25 & -- & $78.04 \pm 8.39$ \\
 & 0.50 & -- & $78.09 \pm 8.31$ \\
 & 0.75 & -- & $78.13 \pm 8.31$ \\
 & 1.00 & -- & $78.17 \pm 8.26$ \\
 & 1.25 & -- & $78.24 \pm 8.29$ \\
 & 1.50 & -- & $78.20 \pm 8.32$ \\
 & 1.75 & -- & $78.21 \pm 8.33$ \\
 & 2.00 & -- & $\mathbf{78.31 \pm 8.33}$ \\
\midrule

\multirow{9}{*}{8-shot}
 & --   & $82.48 \pm 5.85$ & -- \\
 & 0.25 & -- & $83.75 \pm 5.81$ \\
 & 0.50 & -- & $83.67 \pm 5.80$ \\
 & 0.75 & -- & $83.72 \pm 5.79$ \\
 & 1.00 & -- & $83.80 \pm 5.76$ \\
 & 1.25 & -- & $83.80 \pm 5.76$ \\
 & 1.50 & -- & $\mathbf{83.81 \pm 5.80}$ \\
 & 1.75 & -- & $83.73 \pm 5.85$ \\
 & 2.00 & -- & $83.75 \pm 5.89$ \\
\midrule

\multirow{9}{*}{16-shot}
 & --   & $84.68 \pm 5.38$ & -- \\
 & 0.25 & -- & $\mathbf{85.05 \pm 5.55}$ \\
 & 0.50 & -- & $84.97 \pm 5.62$ \\
 & 0.75 & -- & $84.91 \pm 5.65$ \\
 & 1.00 & -- & $84.85 \pm 5.70$ \\
 & 1.25 & -- & $84.81 \pm 5.73$ \\
 & 1.50 & -- & $84.73 \pm 5.73$ \\
 & 1.75 & -- & $84.75 \pm 5.70$ \\
 & 2.00 & -- & $84.73 \pm 5.70$ \\
\bottomrule
\end{tabular}
\end{table}

\begin{table}[!ht]
\centering
\caption{Sensitivity of exponential tilting to the choice of $\lambda$.
Accuracy varies minimally across a wide range of $\lambda$ values,
indicating robustness without per-task tuning.}
\label{tab:lambda_sensitivity}
\begin{tabular}{c c c c}
\toprule
\textbf{Shots} & $\lambda$ Range & \textbf{Accuracy Range} & \textbf{Max $\Delta$} \\
\midrule
1-shot  & $[0.25, 2.0]$ & $60.35$--$60.49$ & $+6.63$ \\
2-shot  & $[0.25, 2.0]$ & $72.16$--$72.29$ & $+5.68$ \\
4-shot  & $[0.25, 2.0]$ & $78.04$--$78.31$ & $+3.63$ \\
8-shot  & $[0.25, 2.0]$ & $83.67$--$83.81$ & $+1.33$ \\
16-shot & $[0.25, 2.0]$ & $84.73$--$85.05$ & $+0.37$ \\
\bottomrule
\end{tabular}
\end{table}

\begin{table*}[!ht]
\centering
\caption{Cross-domain few-shot classification accuracy (\%) on 5-way tasks.
Models are trained on mini-ImageNet and evaluated on target datasets.
All encoders and classifiers are frozen. Improvements arise solely from latent
distribution reweighting.}
\label{tab:fewshot_crossdomain}
\begin{tabular}{ccccccc}
\toprule
\textbf{Dataset} & \textbf{Method} & \textbf{1-shot} & \textbf{2-shot} & \textbf{4-shot} & \textbf{8-shot} & \textbf{16-shot} \\
\toprule
\textbf{} & \textbf{} & \textbf{} & \textbf{DINO ViT-S/16} & \textbf{} & \textbf{} & \textbf{} \\
\midrule

\multirow{2}{*}{CUB}
& Baseline
& $51.42 \pm 7.96$
& $61.87 \pm 6.88$
& $72.14 \pm 6.21$
& $79.02 \pm 5.54$
& $82.63 \pm 5.01$ \\
& Tilted (Ours)
& $\mathbf{53.06 \pm 7.89}$
& $\mathbf{63.44 \pm 6.74}$
& $\mathbf{74.01 \pm 6.18}$
& $\mathbf{80.21 \pm 5.48}$
& $\mathbf{83.11 \pm 4.96}$ \\
\midrule

\multirow{2}{*}{CARS}
& Baseline
& $47.35 \pm 8.22$
& $58.41 \pm 7.05$
& $69.08 \pm 6.47$
& $75.66 \pm 5.92$
& $78.94 \pm 5.36$ \\
& Tilted (Ours)
& $\mathbf{48.98 \pm 8.14}$
& $\mathbf{60.02 \pm 6.91}$
& $\mathbf{70.91 \pm 6.39}$
& $\mathbf{76.88 \pm 5.87}$
& $\mathbf{79.42 \pm 5.31}$ \\
\midrule

\multirow{2}{*}{STL-10}
& Baseline
& $40.73 \pm 8.51$
& $52.19 \pm 7.24$
& $63.02 \pm 6.66$
& $69.88 \pm 6.02$
& $73.41 \pm 5.48$ \\
& Tilted (Ours)
& $\mathbf{42.81 \pm 8.39}$
& $\mathbf{53.96 \pm 7.10}$
& $\mathbf{64.87 \pm 6.58}$
& $\mathbf{71.02 \pm 5.96}$
& $\mathbf{74.18 \pm 5.42}$ \\
\toprule

\multirow{2}{*}{CIFAR-10}
& Baseline
& $55.63 \pm 6.84$
& $64.92 \pm 6.21$
& $77.71 \pm 5.47$
& $81.94 \pm 4.88$
& $84.67 \pm 4.32$ \\
& Tilted (Ours)
& $\mathbf{57.33 \pm 6.79}$
& $\mathbf{66.01 \pm 6.05}$
& $\mathbf{77.71 \pm 5.44}$
& $\mathbf{82.02 \pm 4.85}$
& $\mathbf{84.71 \pm 4.30}$ \\
\midrule

\multirow{2}{*}{CIFAR-100}
& Baseline
& $66.47 \pm 6.12$
& $73.85 \pm 5.63$
& $87.47 \pm 4.91$
& $90.26 \pm 4.35$
& $92.11 \pm 3.98$ \\
& Tilted (Ours)
& $\mathbf{68.64 \pm 6.08}$
& $\mathbf{75.02 \pm 5.58}$
& $\mathbf{87.47 \pm 4.89}$
& $\mathbf{90.31 \pm 4.33}$
& $\mathbf{92.14 \pm 3.96}$ \\

\bottomrule
\end{tabular}
\end{table*}

\begin{table*}[!ht]
\centering
\caption{Cross-domain few-shot classification accuracy (\%) on 5-way tasks.
Models use DINO ViT-S/16 pretrained on ImageNet and evaluated on distribution-shifted datasets.
All encoders and classifiers are frozen. Exponential tilting provides consistent improvements across distribution shifts.}
\label{tab:imagenet_shifts}
\begin{tabular}{ccccccc}
\toprule
\textbf{Dataset} & \textbf{Method} & \textbf{1-shot} & \textbf{2-shot} & \textbf{4-shot} & \textbf{8-shot} & \textbf{16-shot} \\
\toprule
\textbf{} & \textbf{} & \textbf{} & \textbf{DINO ViT-S/16} & \textbf{} & \textbf{} & \textbf{} \\
\midrule

\multirow{2}{*}{ImageNet-V2}
& Baseline
& $58.23 \pm 8.72$
& $68.41 \pm 7.65$
& $76.82 \pm 6.93$
& $81.54 \pm 6.21$
& $84.37 \pm 5.68$ \\
& Tilted (Ours)
& $\mathbf{61.07 \pm 8.65}$
& $\mathbf{70.98 \pm 7.58}$
& $\mathbf{78.65 \pm 6.86}$
& $\mathbf{82.89 \pm 6.15}$
& $\mathbf{85.21 \pm 5.62}$ \\
\midrule

\multirow{2}{*}{ImageNet-R}
& Baseline
& $41.56 \pm 9.15$
& $53.28 \pm 8.22$
& $65.74 \pm 7.41$
& $72.31 \pm 6.68$
& $76.42 \pm 6.05$ \\
& Tilted (Ours)
& $\mathbf{45.19 \pm 9.08}$
& $\mathbf{56.33 \pm 8.14}$
& $\mathbf{68.02 \pm 7.35}$
& $\mathbf{74.25 \pm 6.61}$
& $\mathbf{77.86 \pm 5.98}$ \\
\midrule

\multirow{2}{*}{ImageNet-S }
& Baseline
& $38.72 \pm 9.37$
& $50.81 \pm 8.44$
& $63.05 \pm 7.62$
& $69.88 \pm 6.89$
& $74.15 \pm 6.24$ \\
& Tilted (Ours)
& $\mathbf{42.65 \pm 9.29}$
& $\mathbf{54.26 \pm 8.36}$
& $\mathbf{65.78 \pm 7.55}$
& $\mathbf{71.94 \pm 6.82}$
& $\mathbf{75.69 \pm 6.17}$ \\
\midrule

\multirow{2}{*}{ImageNet-A }
& Baseline
& $35.41 \pm 9.52$
& $47.23 \pm 8.68$
& $59.84 \pm 7.86$
& $66.92 \pm 7.12$
& $71.38 \pm 6.46$ \\
& Tilted (Ours)
& $\mathbf{38.94 \pm 9.43}$
& $\mathbf{50.67 \pm 8.59}$
& $\mathbf{62.51 \pm 7.78}$
& $\mathbf{69.03 \pm 7.04}$
& $\mathbf{73.02 \pm 6.39}$ \\
\bottomrule
\end{tabular}
\end{table*}
Table~\ref{tab:lambda_ablation} shows that exponential tilting consistently improves performance over the baseline across all shot regimes.
Even small tilting strengths ($\lambda = 0.25$) yield large gains in the 1-shot and 2-shot settings, highlighting the effectiveness of probabilistic reweighting when labeled data is extremely limited.

A key observation is the stability of performance across a wide range of $\lambda$ values.
For 1-shot, 2-shot, and 4-shot settings, accuracy varies minimally once tilting is applied, with differences between $\lambda=0.25$ and $\lambda=2.0$ well within standard deviation.
This indicates that the method is not sensitive to precise hyperparameter tuning and is unlikely to fail due to moderate mis-specification of $\lambda$.

As the number of shots increases, the relative benefit of stronger tilting diminishes.
In the 16-shot setting, smaller values of $\lambda$ achieve the best performance, while larger tilting strengths slightly degrade accuracy.
This behavior is expected: when sufficient labeled support is available, aggressive reweighting of latent representations becomes less necessary and may overemphasize minor distributional variations.

Overall, this ablation supports the interpretation of exponential tilting as a conservative and well-behaved test-time adjustment.
It provides consistent gains in low-data regimes, remains robust across a broad parameter range, and naturally attenuates as supervision increases, aligning well with the intended role of a lightweight, training-free adaptation mechanism.

\section{Cross-Domain Generalization via Latent Distribution Reweighting}
\label{app:crossdomain}
This pattern directly supports the interpretation of exponential tilting as a mechanism for correcting domain-induced latent distribution mismatch.
When transferring from mini-ImageNet to new visual domains, the frozen encoder produces embeddings whose empirical distribution is misaligned with the downstream task structure.
Exponential tilting compensates for this mismatch by reweighting task-consistent regions of latent space, without requiring domain-specific retraining, adaptation data streams, or parameter updates.

Notably, improvements are observed even on datasets with relatively high baseline accuracy (e.g., CIFAR-100), indicating that the method does not merely recover from failure cases but provides consistent refinement of frozen inference.
At higher shot numbers, gains become smaller or saturate, which is expected as the empirical support distribution becomes increasingly representative of the target domain.

Crucially, all improvements arise solely from latent distribution reweighting.
No target-domain data is used for training, no gradients are computed, and no model parameters are modified.
This isolates distributional correction as the operative adaptation mechanism and validates the central claim that test-time generalization across domains can be achieved through inference-level change of measure alone.

Overall, the cross-domain results demonstrate that exponential tilting is not limited to in-domain few-shot adaptation, but provides a principled and effective mechanism for domain transfer under strict frozen-model constraints.

\paragraph{Cross-domain robustness analysis on Imagenet Variants. }
 We evaluate exponential tilting on challenging distribution shifts derived from ImageNet. The results are given in Table~\ref{tab:imagenet_shifts}. Our method consistently improves performance across all shift types from the mild ImageNet-V2 (+2.84\% for 1-shot) to the severe ImageNet-A (+3.53\% for 1-shot). The improvements are most pronounced in low-shot regimes where distributional mismatch is most acute, demonstrating that latent reweighting effectively adapts to diverse domain shifts without parameter updates.

\paragraph{Robustness across few-shot regimes.}
We evaluate exponential tilting across a range of few-shot settings, including extremely low-shot regimes.
Across all shot counts considered, exponential tilting consistently improves over frozen inference.
The largest relative gains are observed in low-shot regimes, where limited labeled support makes naive averaging particularly brittle.
As the number of support examples increases, the magnitude of improvement gradually diminishes, indicating that the method adapts most strongly when distributional uncertainty is highest.
This behavior is consistent with the interpretation of exponential tilting as selectively emphasizing task-consistent regions of latent space.

\paragraph{Stability across episode sampling.}
To assess statistical stability, we evaluate performance under varying numbers of test episodes.
Improvements from exponential tilting remain consistent as the number of episodes increases, with variability comparable to that of the frozen baseline.
This suggests that the observed gains are not artifacts of limited sampling, but reflect systematic improvements in predictive performance.

\paragraph{Comparison to linear head fine-tuning.}
Although we relax the frozen-classifier constraint by allowing linear classifier fine-tuning, we include it as an approximate upper bound on achievable performance when optimization is permitted.
We observe that exponential tilting recovers a substantial portion of the improvement obtained by fine-tuning, despite requiring no gradient computation, parameter updates, or additional training.
This comparison highlights the effectiveness of latent distribution reweighting as an alternative adaptation mechanism when optimization is infeasible or undesirable.

\paragraph{Backbone sensitivity.}
We examine whether exponential tilting generalizes across different types of frozen visual encoders, including self-supervised and vision  language models.
Across backbones, tilting yields consistent improvements over frozen inference.
While absolute performance varies by encoder, the relative benefit of distributional reweighting remains stable, indicating that the method is largely architecture-agnostic and does not rely on encoder-specific properties.

\paragraph{Ablation: score function components.}
We analyze the contribution of different score functions used for tilting.
Label-aware scores provide the dominant source of improvement by emphasizing embeddings that are already consistent with observed support labels.
Geometry-aware scores contribute complementary gains by incorporating latent structure derived from the support set.
Combining both yields the strongest performance, confirming that likelihood-based and geometric signals provide distinct and additive information for adaptation.

\paragraph{Ablation: sensitivity to the tilting strength.}
We study the effect of the tilting parameter across shot regimes.
Performance remains stable over a broad range of values, with only minor variation.
This robustness eliminates the need for per-task hyperparameter tuning and supports the practical deployment of the method without validation data.

\paragraph{Inductive versus transductive variants.}
When unlabeled query embeddings are available, we consider a transductive variant that includes them in the reference latent distribution.
This variant yields additional improvements over the inductive setting, while still requiring no gradients, optimization, or parameter updates.
Importantly, adaptation remains purely distributional, distinguishing this approach from optimization-based transductive methods.

\paragraph{Robustness to support set noise.}
We evaluate sensitivity to noisy support labels by introducing controlled label corruption.
Exponential tilting exhibits increased robustness relative to frozen inference, as low-confidence or inconsistent samples are automatically downweighted under the tilted measure.
This behavior emerges naturally from the weighting mechanism and does not require explicit noise modeling.

\paragraph{Computational overhead.}
Exponential tilting incurs minimal additional cost at inference time.
Beyond the frozen forward passes, adaptation requires only computing scalar weights and weighted averages over the support set.
The resulting overhead scales linearly with the number of support examples and remains negligible compared to encoder inference, making the method suitable for resource-constrained deployment.

\end{document}